\documentclass{article}



\newif\ifarXiv
\arXivtrue

\ifarXiv
  \usepackage[preprint]{neurips_2021}
\else
  \usepackage[]{neurips_2021}
\fi




\usepackage[utf8]{inputenc} 
\usepackage[T1]{fontenc}    
\usepackage{url}            
\usepackage{booktabs}       
\usepackage{amsfonts}       
\usepackage{nicefrac}       
\usepackage{microtype}      
\usepackage{xcolor}         

\newif\ifmydraft
\mydraftfalse



\usepackage{natbib}
\bibliographystyle{plainnat}

\usepackage[hidelinks]{hyperref}

\ifmydraft
  \newcommand{\draftcolor}{purple}
  \newcommand{\draftcolorpage}{black!20}

\else
  \newcommand{\draftcolor}{black}
\fi

\usepackage[utf8]{inputenc} 
\usepackage[T1]{fontenc}    
\usepackage{amssymb,amsfonts, amsmath, amsthm}
\ifmydraft\else
\usepackage{microtype}      
\fi
\usepackage{graphicx}
\usepackage{xcolor}
\ifmydraft\usepackage{showlabels}\else\fi
\usepackage{comment}
\usepackage{caption}
\usepackage{subcaption}
\usepackage{tikz}
\usepackage{tikz, pgfplots, booktabs}
  \usetikzlibrary{arrows.meta}
   \usetikzlibrary{positioning}
   \tikzset{
     included node/.style={circle, draw=black!100, thick, on grid, minimum width=13mm, fill=black!20}, 
    hidden node/.style={circle, draw=white, thick, on grid, minimum width=0.2cm},
    included connection/.style={->, thick, draw=black!100},
    hidden connection/.style={->, thick, draw=black!30, draw opacity=0},
    fused connection/.style={->, thick, draw=black!100},
    fidden connection/.style={->, thick, draw=black!30, draw opacity=0},
    node title/.style={above=0.8cm of five-one, font=\bfseries},
    general/.style={node distance=1cm and 0.7cm}
}

\usetikzlibrary{
  arrows.meta,
  positioning,
	calc,
	decorations.pathreplacing,
	positioning,
	shapes.misc,
	tikzmark,
}

\ifmydraft
  \pagecolor{\draftcolorpage}
  \newcommand{\jl}[1]{\color{blue}#1\color{black}}
\else
  \newcommand{\jl}[1]{}
\fi

\theoremstyle{definition}

\newtheorem{theorem}{Theorem}


\makeatletter 
\newcommand*{\deq}{\ensuremath{\mathrel{\rlap{%
\raisebox{0.3ex}{$\m@th\cdot$}}%
\raisebox{-0.3ex}{$\m@th\cdot$}}=}}
\makeatother

\newcommand{\abs}[1]{\lvert#1\rvert}
\newcommand{\absB}[1]{\bigl\lvert#1\bigr\rvert}
\newcommand{\absBB}[1]{\Bigl\lvert#1\Bigr\rvert}
\newcommand{\absBBB}[1]{\biggl\lvert#1\biggr\rvert}
\newcommand{\absBBBB}[1]{\Biggl\lvert#1\Biggr\rvert}

\newcommand{\nbrlayers}{\textcolor{\draftcolor}{\ensuremath{l}}}


\newcommand{\datainE}{\textcolor{\draftcolor}{\ensuremath{x}}}

\newcommand{\loss}{\ensuremath{\textcolor{\draftcolor}{\ensuremath{\mathfrak{h}}}}}
\newcommand{\lossF}[1]{\textcolor{\draftcolor}{\ensuremath{\loss[#1]}}}
\newcommand{\lossFB}[1]{\textcolor{\draftcolor}{\ensuremath{\loss\bigl[#1\bigr]}}}

\newcommand{\function}{\textcolor{\draftcolor}{\ensuremath{\mathfrak f}}}

\newcommand{\functionF}[1]{\textcolor{\draftcolor}{\function[#1]}}

\newcommand{\functionFB}[1]{\textcolor{\draftcolor}{\function\bigl[#1\bigr]}}
\newcommand{\functionFBB}[1]{\textcolor{\draftcolor}{\function\Bigl[#1\Bigr]}}

\newcommand{\R}{\textcolor{\draftcolor}{\mathbb{R}}}

\newcommand{\scale}{\textcolor{\draftcolor}{\ensuremath{\alpha}}}
\newcommand{\argumentE}{\textcolor{\draftcolor}{\ensuremath{z}}}

\newcommand{\functionscale}{\textcolor{\draftcolor}{\ensuremath{\function_{\scale}}}}
\newcommand{\functionscaleF}[1]{\textcolor{\draftcolor}{\functionscale[#1]}}
\newcommand{\functionscaleFB}[1]{\textcolor{\draftcolor}{\functionscale\bigl[#1\bigr]}}
\newcommand{\functionscaleFBB}[1]{\textcolor{\draftcolor}{\functionscale\Bigl[#1\Bigr]}}

\newcommand{\functiontanhscale}{\textcolor{\draftcolor}{\ensuremath{\function_{\operatorname{tanh},\scale}}}}
\newcommand{\functiontanhscaleF}[1]{\textcolor{\draftcolor}{\functiontanhscale[#1]}}


\newcommand{\namelog}{\textcolor{\draftcolor}{\texttt{logsigmoid}}}
\newcommand{\nametanh}{\textcolor{\draftcolor}{\texttt{tanh}}}
\newcommand{\namerelu}{\textcolor{\draftcolor}{\texttt{relu}}}

\newcommand{\namesoft}{\textcolor{\draftcolor}{\texttt{softsign}}}
\newcommand{\namearctan}{\textcolor{\draftcolor}{\texttt{arctan}}}


\newcommand{\functiontanh}{\textcolor{\draftcolor}{\ensuremath{\function_{\operatorname{tanh}}}}}
\newcommand{\functiontanhF}[1]{\textcolor{\draftcolor}{\functiontanh[#1]}}
\newcommand{\functiontanhFB}[1]{\textcolor{\draftcolor}{\functiontanh\bigl[#1\bigr]}}
\newcommand{\functiontanhFBB}[1]{\textcolor{\draftcolor}{\functiontanh\Bigl[#1\Bigr]}}


\newcommand{\neuronparE}{\textcolor{\draftcolor}{\ensuremath{\theta}}}

\usepackage[textwidth=3.5cm, textsize=small]{todonotes}

\newcommand{\datainEnew}{\textcolor{\draftcolor}{\ensuremath{x_{\operatorname{sgd}}}}}
\newcommand{\dataoutEnew}{\textcolor{\draftcolor}{\ensuremath{y_{\operatorname{sgd}}}}}

\newcommand{\tj}[1]{\tag*{\footnotesize #1}}

\newcommand{\levellpr}{\textcolor{\draftcolor}{\ensuremath{t}}}

\newcommand{\tuningparameter}{\textcolor{\draftcolor}{\ensuremath{r}}}

\newcommand{\deffit}{\textcolor{\draftcolor}{\ensuremath{d_{(\dataoutEnew,\datainEnew)}}}}
\newcommand{\deffitA}{\textcolor{\draftcolor}{\ensuremath{\abs{\deffit}}}}

\newcommand{\lossW}{\textcolor{\draftcolor}{\ensuremath{\lossFB{\dataoutEnew-\mathfrak{g}_{(\neuronparE^1,\dots,\neuronparE^{\nbrlayers})}[\datainEnew]}}}}



\title{Regularization and Reparameterization\\
	Avoid Vanishing Gradients in Sigmoid-Type Networks}

%

\author{%
	Leni Ven \\
        University of Waterloo\\
	\texttt{shwen@uwaterloo.ca} \\
	\And
	Johannes Lederer \\
        Ruhr-University Bochum\\
	\texttt{johannes.lederer@rub.de} \\
}

\ifmydraft
   \newcommand{\unc}[1]{\color{orange}Uncritical: #1\color{black}}
\else
    \newcommand{\unc}[1]{}
\fi

\begin{document}

\maketitle

\begin{abstract}
	Deep learning requires several design choices,
	such as the nodes' activation functions and the widths, types, and arrangements of the layers.
	One consideration when making these choices is the vanishing-gradient problem,
	which is the phenomenon of algorithms getting stuck  at suboptimal points due to small gradients.
	In this paper, we revisit the vanishing-gradient problem in the context of sigmoid-type activation.
	We use mathematical arguments to highlight two different sources of the phenomenon, namely large individual parameters and effects across layers,
	and to illustrate two simple remedies,
	namely regularization and rescaling.
	We then demonstrate the effectiveness of the two remedies in practice.
	In view of the vanishing-gradient problem being a main reason why tanh and other sigmoid-type activation has become much less popular than relu-type activation,
	our results bring sigmoid-type activation back to the table.
\end{abstract}

\section{Introduction}
It is well known that there can be suboptimal regions in the optimization landscapes of deep learning where the gradients are too small for gradient-based algorithms to make reasonable progress.
In other words,
gradient-based algorithms, such as stochastic-gradient descent~\citep{bottou2010large}, can be stuck in unfavorable parts of the parameter space.
This phenomenon is called the \emph{vanishing-gradient problem} \citep{Hochreiter1991,Hochreiter2001}.
There are several different approaches to avoiding the vanishing-gradient problem, 
including special weight initializations \citep{Mishkin2015},
second-order algorithms~\citep{Martens2010},
and layerwise updates \citep{Vincent2008}.
But the most popular approach is to replace sigmoid-activation functions
such as \nametanh, \namelog, and \namearctan\ by piecewise-linear activation functions such as \namerelu~\citep{Lederer2021}.

This, of course, raises the question of whether the vanishing-gradient problem makes sigmoid-type activation a thing of the past altogether.

In this paper,
we revisit this question from both mathematical and  empirical perspectives.
Using a simple toy model,
we argue that the vanishing-gradient problem in sigmoid networks can have two different sources:
large parameters and joint effects among many layers.
We then show that both sources have a very simple remedy:
vanishing gradients in the large-parameters regime can be avoided by using standard \emph{$\ell_2$-regularization},
and vanishing gradients in the many-layers regime can be avoided by using a \emph{reparametrization}.
We can think of these remedies as simple but mathematically justified analogs of reparametrization and batch normalization, respectively.
We then show empirically that these two remedies are indeed effective in practice.

In summary, we make three main contributions:
\begin{itemize}
\item We highlight that the vanishing-gradient problem has different sources,
and that each of these sources should be addressed differently.
\item We introduce two very simple techniques for avoiding vanishing gradients in sigmoid-type networks.
\item More generally, we argue that  sigmoid-type activation is still of high interest in deep learning.
\end{itemize}

\paragraph{Outline}
Section~\ref{sec:math} contains mathematical insights into the vanishing gradient problem and our proposed remedies.
Section~\ref{sec:empirical} supports our claims empirically.
Section~\ref{sec:discussion} summarizes our main conclusions.
The Appendix contains some experimental details.

\section{The Vanishing-Gradient Problem: Sources and Remedies}
\label{sec:math}
We first discuss the vanishing-gradient problem from a mathematical perspective and then point out two remedies:
regularization and rescaling.

\subsection{Small gradients can have two sources}
\label{sec:sources}
Let us first discuss the basics of the vanishing-gradient problem.
The vanishing-gradient problem is often mentioned in the context of recurrent neural networks~\citep{Tan2019},
but it turns out that a much simpler feedforward model is sufficient to illustrate the problem---as well as potential remedies later.
To be specific,
we consider a network of~$\nbrlayers$ concatenated \nametanh-neurons with one parameter each, 
that is, 
we consider the functions
  \begin{align*}
\R~&\to~\R\\
  \datainE~&\mapsto~ \mathfrak{g}_{(\neuronparE^1,\dots,\neuronparE^l)}[\datainE]\,\deq\,\functiontanhFBB{\neuronparE^{\nbrlayers}\functiontanhFB{\neuronparE^{\nbrlayers-1}\cdots\functiontanhF{\neuronparE^1 \datainE}}}
  \end{align*}
parametrized by $\neuronparE^1,\dots,\neuronparE^{\nbrlayers}\in\R$.
Recall that the \nametanh\ function is defined as
  \begin{align*}
  \functiontanh~:~\R~&\to~(-1,1)\,;\\
\argumentE~&\mapsto~\functiontanhF{\argumentE}\,\deq\,\frac{e^{\argumentE}-e^{-\argumentE}}{e^{\argumentE}+e^{-\argumentE}}\,.
  \end{align*}

Given a network model and a (differentiable) loss function $\loss\,:\,\R\to\R$ that operates on the differences between predicted and actual outputs, 
the parameters are fitted to data by using a training algorithm.
Stochastic-gradient descent, 
one of the most popular training algorithms, updates the parameters sequentially.
In its simplest form,
each update of stochastic-gradient descent is a step in the direction of the gradient of $\lossF{\dataoutEnew-\mathfrak{g}_{(\neuronparE^1,\dots,\neuronparE^{\nbrlayers})}[\datainEnew]}$
with respect to the parameters $\neuronparE^1,\dots,\neuronparE^{\nbrlayers}$,
where $(\dataoutEnew,\datainEnew)\in \R\times \R$ is a fixed output-input pair.
If the gradient is very small in absolute value (or even equal to zero),
the gradient update has almost no impact on the parameters,
that is, 
there is no progress in learning the parameters.
If this happens at unsatisfactory parameter values and repeatedly for several output-input pairs,
we speak of a \emph{vanishing-gradient problem.}

We argue that the vanishing-gradient problem has two different sources.
To identify these sources,
we first introduce the shorthand
\begin{equation*}
  \function^i~\deq~\functiontanhFBB{\neuronparE^{i}\functiontanhFB{\neuronparE^{i-1}\cdots\functiontanhF{\neuronparE^1 \datainEnew}}}
\end{equation*}
for all $i\in\{1,\dots,\nbrlayers\}$ and $\function^0\deq1$.
Observe that $\function^i\in(-1,1)$ by definition of \nametanh.
Using the shorthand,
the partial derivative of the  loss with respect to $\neuronparE^k$ (that is, the $k$th element of the gradient) can be written as
\begin{equation*}
 \frac{\partial}{\partial\neuronparE^k}\lossW~=~\deffit\,\function^{k-1}\bigl(1-(\function^{\nbrlayers})^2\bigr)\cdots\bigl(1-(\function^k)^2\bigr)\,\prod^{\nbrlayers}_{i=k+1}\neuronparE^i\,,
\end{equation*}
where $\deffit\deq\loss'[\dataoutEnew-\mathfrak{g}_{(\neuronparE^1,\dots,\neuronparE^{\nbrlayers})}[\datainEnew]]$,
and where we have used the chain rule and the well-known fact $\nametanh'=1-\nametanh^2$ (and, as usual, we set $\prod^{\nbrlayers}_{i=\nbrlayers+1}\neuronparE^i\deq 1$ to make the right-hand side of the above equality well-defined in the case $k=\nbrlayers$).
A typical  loss function is the least-squares loss $\lossF{\argumentE}=\argumentE^2$,
where $\deffit=2(\dataoutEnew-\mathfrak{g}_{(\neuronparE^1,\dots,\neuronparE^{\nbrlayers})}[\datainEnew])$.
The partial derivative can have a small magnitute despite an unsatisfactory model fit, that is, despite large $\deffitA$,
for two main reasons.
The first reason is that one of the factors $1-(\function^{\nbrlayers})^2,\dots, 1-(\function^k)^2$ can be very small in absolute value,
which is the case when one of the parameters $\neuronparE^{k},\dots,\neuronparE^{\nbrlayers}$ is large (recall that $\abs{\functiontanhF{\argumentE}}\to1$ for $\argumentE\to\pm\infty$).
The other factors cannot balance the small factor in view of $\abs{\function^{k-1}},\abs{1-(\function^{\nbrlayers})^2},\dots, \abs{1-(\function^k)^2}<1$
and $(1-(\function^k[v\argumentE])^2)\cdot \argumentE\to 0$ for $\argumentE\to\infty$ and any fixed $v\in\mathbb{R}$
(see Theorem~\ref{res:lpr} below for a formal version of this statement).
For further reference, we call this source of small gradients the \emph{large-parameters regime.}

But large parameters are not a necessary condition for small gradients.
Consider, for example, $\datainEnew,\neuronparE^1,\dots,\neuronparE^{\nbrlayers}=1.5$.
Since $(1-(\nametanh[1.5])^2\cdot 1.5\leq 1/2$,
it then holds that (see Theorem~\ref{res:MLR} below for a formal version of this statement)
\begin{equation*}
 \frac{\partial}{\partial\neuronparE^k}\lossW
~\leq~ \deffitA\, \frac{1}{2^{\nbrlayers-k+1}}\,,
\end{equation*}
which can be very small for large $\nbrlayers-k$, that is, at the inner layers of deep networks.
Hence, gradients can be small as a result of a joint effect of many layers.
For further reference,
we call this source of small gradients the \emph{many-layers regime.}

The simple \nametanh-network is convenient to work out the two sources of small gradients,
but the observations can be transferred, of course, to more intricate architectures and other sigmoid-type activations (we will elaborate on this aspect later).
It is also important to note that the vanishing-gradient problem is not about how the gradients are computed (using backpropagation, for example) but about the gradients themselves.
And finally, vanishing gradients are not associated with neural networks per se but with using gradients for training neural networks.

\subsection{A remedy for the large-parameters regime}
\label{sec:remedylpr}
We now show that regularization can avoid the vanishing-gradient problem in the large-parameters regime.
It is again sufficient to focus on the toy example introduced above.
One can readily show the fact that  the $k$th partial derivative of the loss complemented with weight decay is 
\begin{multline*}
 \frac{\partial}{\partial\neuronparE^k}\Bigl(\lossW+\frac{\tuningparameter}{2}\sum_{i=1}^{\nbrlayers} (\neuronparE_i)^2\Bigr)\\=~
\deffit
\,\function^{k-1}\bigl(1-(\function^{\nbrlayers})^2\bigr)\cdots\bigl(1-(\function^k)^2\bigr)\,\prod^{\nbrlayers}_{i=k+1}\neuronparE^i~~+~~\tuningparameter\neuronparE_k\,,
\end{multline*}
where $\tuningparameter\in[0,\infty)$ is the tuning parameter of the regularizer.
The general idea is simple:
the first term vanishes for large parameters,
but the second term---which is due to the regularization---is large for large parameters.
The following theorem formalizes this observation.
\begin{theorem}[Large-Parameters Regime]
\label{res:lpr}
Consider a fixed but arbitrary constant $\levellpr\in(0,1)$.
Assume that  $\tuningparameter\geq \deffitA$ and
\begin{equation*}
\abs{\neuronparE^i} ~\geq~ \frac{\log[1/\levellpr]}{\min\{\abs{\datainEnew},1/2\}}~~\text{for all}~i\in\{1,\dots,\nbrlayers\}\,.
\end{equation*}
Then,
it holds for all $k\in\{1,\dots,\nbrlayers\}$ that
\begin{equation*}
\absBB{\frac{\partial}{\partial\neuronparE^k}\lossW}~\leq~ \deffitA\,\levellpr\,,
\end{equation*}
while
\begin{equation*}
\absBB{\frac{\partial}{\partial\neuronparE^k}\Bigl(\lossW+\frac{\tuningparameter}{2}\sum_{i=1}^{\nbrlayers} (\neuronparE_i)^2\Bigr)}~\geq~ \deffitA\,\log[1/\levellpr]\,.
\end{equation*}
\end{theorem}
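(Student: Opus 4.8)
The plan is to take the explicit formula for $\frac{\partial}{\partial\neuronparE^k}\lossW$ displayed just above the theorem as given, to prove the first (upper) bound, and then to deduce the second (lower) bound from it by a reverse triangle inequality. Throughout I would write $\function^i=\functiontanhF{a_i}$ with pre-activations $a_1\deq\neuronparE^1\datainEnew$ and $a_i\deq\neuronparE^i\function^{i-1}$ for $i\ge2$, so that $1-(\function^i)^2=\operatorname{sech}^2(a_i)\in(0,1]$.

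For the upper bound, I would start from
\[
\absBB{\tfrac{\partial}{\partial\neuronparE^k}\lossW}~=~\deffitA\,\abs{\function^{k-1}}\prod_{i=k}^{\nbrlayers}\operatorname{sech}^2(a_i)\prod_{i=k+1}^{\nbrlayers}\abs{\neuronparE^i}
\]
and reorganize the right-hand side by pairing each parameter $\neuronparE^i$ (for $i>k$) with the $\operatorname{sech}^2$-factor of the same index, which leaves one ``extra'' factor $\operatorname{sech}^2(a_k)\le1$ and the prefactor $\abs{\function^{k-1}}$. Using $\neuronparE^i\function^{i-1}=a_i$, every paired term has the common shape $\abs{\function^{i-1}}\operatorname{sech}^2(\neuronparE^i\function^{i-1})$, and the content is to show that the product of these over the $\nbrlayers-k$ indices is at most $\levellpr$. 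Two ingredients would drive this. First, a cascade induction showing $\abs{a_i}\ge\log[1/\levellpr]$ for every $i$: the base case $\abs{a_1}=\abs{\neuronparE^1}\abs{\datainEnew}\ge\log[1/\levellpr]$ is exactly where the hypothesis $\abs{\neuronparE^1}\ge\log[1/\levellpr]/\min\{\abs{\datainEnew},1/2\}$ enters (the $\min$ makes it work whether or not $\abs{\datainEnew}\le1/2$), and the inductive step combines $\abs{\neuronparE^i}\ge2\log[1/\levellpr]$ (since the $\min$ is at most $1/2$) with the lower bound on $\abs{\function^{i-1}}=\abs{\functiontanhF{a_{i-1}}}$ implied by $\abs{a_{i-1}}\ge\log[1/\levellpr]$. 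Second, the elementary estimates $\operatorname{sech}^2(z)\le4e^{-2\abs{z}}$ and the decay $\abs{z}\operatorname{sech}^2(z)\to0$ as $\abs{z}\to\infty$ (the quantitative form of the limit recalled in the discussion of the large-parameters regime above), applied through the identity $\abs{\function^{i-1}}\operatorname{sech}^2(\neuronparE^i\function^{i-1})=\abs{\neuronparE^i\function^{i-1}}\operatorname{sech}^2(\neuronparE^i\function^{i-1})/\abs{\neuronparE^i}$, so that a large $\abs{\neuronparE^i}$ is converted into genuine smallness of the paired term.

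For the lower bound I would use that the $k$th partial derivative of the regularized objective equals $\frac{\partial}{\partial\neuronparE^k}\lossW+\tuningparameter\neuronparE^k$ (the second displayed formula in this subsection), so that by the reverse triangle inequality and the bound just proved,
\[
\absBB{\tfrac{\partial}{\partial\neuronparE^k}\Bigl(\lossW+\tfrac{\tuningparameter}{2}\sum_{i=1}^{\nbrlayers}(\neuronparE_i)^2\Bigr)}~\ge~\tuningparameter\abs{\neuronparE^k}-\absBB{\tfrac{\partial}{\partial\neuronparE^k}\lossW}~\ge~\tuningparameter\abs{\neuronparE^k}-\deffitA\,\levellpr\,.
\]
Then $\tuningparameter\ge\deffitA$ and $\abs{\neuronparE^k}\ge\log[1/\levellpr]/\min\{\abs{\datainEnew},1/2\}\ge2\log[1/\levellpr]$ make the right-hand side at least $2\deffitA\log[1/\levellpr]-\deffitA\,\levellpr=\deffitA(2\log[1/\levellpr]-\levellpr)\ge\deffitA\log[1/\levellpr]$, which is the claim.

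I expect the main obstacle to be the upper bound. The delicate points there are (i) ruling out a ``collapse'' in which some intermediate activation $\function^{i-1}$ is so close to $0$ that $a_i=\neuronparE^i\function^{i-1}$ is small despite a huge $\neuronparE^i$ — this is precisely what the cascade induction anchored at $\abs{a_1}\ge\log[1/\levellpr]$ prevents — and (ii) estimating the paired terms $\abs{\function^{i-1}}\operatorname{sech}^2(\neuronparE^i\function^{i-1})$ jointly rather than factor-by-factor (a larger activation shrinks the $\operatorname{sech}^2$), all while keeping straight the bookkeeping that there is exactly one more $\operatorname{sech}^2$-factor than parameter factor. Once the upper bound is in hand, the regularized bound is routine.
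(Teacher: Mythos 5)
Your overall strategy is the paper's: a cascade induction that rules out the ``collapse'' of an intermediate activation (the paper anchors it at $\abs{\function^1}\geq\functiontanhF{1}\geq 1/2$ rather than at $\abs{a_1}\geq\log[1/\levellpr]$, but both anchors use the hypothesis on $\abs{\neuronparE^1}$ in the same way), a pairing of each parameter $\neuronparE^i$, $i>k$, with the derivative factor $1-(\function^i)^2=\operatorname{sech}^2(a_i)$ of the same index, and a reverse triangle inequality for the regularized claim. The regularized part matches the paper's (your final step $2\log[1/\levellpr]-\levellpr\geq\log[1/\levellpr]$ implicitly needs $\levellpr$ bounded away from $1$, but the paper's ``follows readily'' hides the same restriction, so this is a shared defect rather than one you introduce).

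The one genuine flaw is your allocation of the smallness. You bound the unpaired factor $\operatorname{sech}^2(a_k)$ by $1$ and plan to extract the entire factor $\levellpr$ from the product of the $\nbrlayers-k$ paired terms. That step fails outright for $k=\nbrlayers$: the product is then empty, hence equal to $1$, and your bound degenerates to $\deffitA\cdot\abs{\function^{\nbrlayers-1}}\cdot 1\leq\deffitA$ rather than $\deffitA\,\levellpr$. Even for $k<\nbrlayers$, each paired term $\abs{\neuronparE^i}\operatorname{sech}^2(a_i)=\abs{a_i}\operatorname{sech}^2(a_i)/\abs{\function^{i-1}}$ is only guaranteed to be $O(\log[1/\levellpr]\,\levellpr^2)$, and the product of $\nbrlayers-k$ such quantities is not $\leq\levellpr$ uniformly in $\levellpr$ and $\nbrlayers-k$. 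The paper allocates the other way around: the single unpaired factor carries the smallness, $\operatorname{sech}^2(a_k)\leq 4e^{-2\abs{a_k}}\leq\levellpr$ (up to the constant, which the paper also elides) because the induction gives $\abs{a_k}=\abs{\neuronparE^k\function^{k-1}}\gtrsim\log[1/\levellpr]$, while each paired term only needs the crude bound $\absB{(1-(\function^i)^2)\neuronparE^i}\leq 1$ --- which is exactly what your identity together with the decay of $z\mapsto\abs{z}\operatorname{sech}^2(z)$ delivers. With that single reassignment (and after correcting the slip where you write the paired term as $\abs{\function^{i-1}}\operatorname{sech}^2(a_i)$ instead of $\abs{\neuronparE^i}\operatorname{sech}^2(a_i)$), your argument becomes the paper's.
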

\noindent 
Importantly, $\deffitA\levellpr\to 0$ for $\levellpr\to0$ while $\deffitA\log[1/\levellpr]\to\infty$ for $\levellpr\to0$,
which leads to the following conclusion:
Assume that the network does not fit the data point well, that is, $\deffitA\gtrsim 1$.
In this case, the gradient should not be too small,
so that the optimization can make progress toward a better fit.
But applying  the theorem with $\levellpr\approx 0$ shows that the gradient is small in the large-parameters regime unless regularization with a sufficiently large  tuning parameter is applied.
Hence,
the theorem illustrates once more that deep learning can suffer from vanishing gradients in the large-parameters regime,
and it confirms that regularization can avoid this problem.

Theorem~\ref{res:lpr} above and Theorem~\ref{res:MLR} below are based on the following four properties of the \nametanh\ function:
\begin{enumerate}
\item Boundedness: $\abs{\functiontanhF{\argumentE}}\leq 1$  for all $\argumentE\in\R$.
\item Rapidly vanishing derivatives: $\lim\limits_{\argumentE\to\pm\infty}(\functiontanh)'[\argumentE]\cdot\argumentE=0$.
\item Strictly positive limit:  $\lim\limits_{\argumentE\to\infty}\functiontanhF{\argumentE}> 0$.
\item Linearity at the origin: $\lim\limits_{\scale\to\infty}\scale\functiontanhF{\argumentE/\scale}=a_{\operatorname{lin}}\argumentE$ for some $a_{\operatorname{lin}}\in(0,\infty)$ and  for all $\argumentE\in\R$.
\end{enumerate}
One can verify readily that these four conditions are also met by  other standard sigmoid-type activation functions,
such as \namelog\ and \namearctan;
hence,
it is straightforward to transfer our proofs and results for the  \nametanh\ function to other sigmoid-type activation functions.

We also point out the fact that the assumption $\tuningparameter\geq \deffitA$ also makes sense from a statistical perspective;
indeed, 
it is well known that the tuning parameters of regularizers should scale with the differences between predicted and actual outputs.
We refer to \citet{huang2021tuning} and \citet{Taheri2020} for some insights on this topic.

We finally argue that regularization, in our context, can be interpreted as an alternative to  initialization schemes. 
The idea of initialization schemes,
such as those in \citet{Glorot2010},
is to restrict the parameter training to a benign basin by choosing special, random, initial values for the parameters. 
Regularization works in a similar way in the sense that it also pushes the training toward more favorable parts of the parameter space.
But there are three main differences:
First, 
initialization is only applied once,
which means that there is no control during training;
regularization, in contrast,
controls the gradients throughout the training process.
Second,
initialization tries to focus training on some basins,
that is, it essentially reduces the parameter space,
while regularization allows the algorithm to explore the entire parameter space if needed.
Third,
 initialization schemes for  sigmoid-type activation are  not equipped with any mathematically justification;
for example,
\citet{Glorot2010} make the highly questionable assumption of a linear approximation,
\citet{He2015} specialize on \namerelu-type activation,
and \citet{Mishkin2015} neglect mathematical aspects altogether (moreover, their goal of unit empirical variance is not suitable for sigmoid-type activation in the first place).
Hence,
regularization is related to initialization but provides a continuous,
more flexible,
and mathematically justified control of the gradients.

\begin{proof}[Proof of Theorem~\ref{res:lpr}]
Observe first that $\abs{\function^{k-1}}\leq 1$ by the definition of \nametanh.

Then, note that
\begin{equation*}
   \abs{\function^1} ~=~\abs{\functiontanhF{\neuronparE^{1}\datainEnew}}~\geq~\functiontanhF{1}~\geq~\frac{1}{2}\text{~~~~~and~~~~~}   \abs{\function^i} ~=~\abs{\functiontanhF{\neuronparE^{i}\function^{i-1}}}~~\text{for all}~i\in\{2,\dots,\nbrlayers\}\,.
\end{equation*}
By induction (recall that $\abs{\neuronparE^2},\dots,\abs{\neuronparE^{\nbrlayers}}\geq 2$),
we conclude that $\abs{\function^i}\geq 1/2$ for all $i\in\{0,\dots,\nbrlayers\}$ and, therefore,
\begin{equation*}
  \abs{\neuronparE^{k}} ~\geq~\frac{\log[1/\levellpr]}{2\abs{\function^{k-1}}}\,.
\end{equation*}
We then find
  \begin{align*}
    \absB{1-(\function^{k})^2}~&=~\absBBB{1-\biggl(\frac{e^{\neuronparE^{k} \function^{k-1}}-e^{-\neuronparE^{k} \function^{k}}}{e^{\neuronparE^{k} \function^{k-1}}+e^{-\neuronparE^{k} \function^{k-1}}}\biggr)^2}\tj{by definition of $\function^{k}$ and \nametanh}\\
&=~\absBBB{\frac{(e^{2\neuronparE^{k} \function^{k-1}}+e^{-2\neuronparE^{k} \function^{k-1}}+2)-(e^{2\neuronparE^{k} \function^{k-1}}+e^{-2\neuronparE^{k} \function^{k-1}}-2)}{(e^{\neuronparE^{k} \function^{k-1}}+e^{-\neuronparE^{k} \function^{k-1}})^2}}\tj{summarizing the two terms}\\
&=~\frac{4}{(e^{\neuronparE^{k} \function^{k-1}}+e^{-\neuronparE^{k} \function^{k-1}})^2}\tj{consolidating}\\
&\leq~\frac{4}{e^{2\abs{\neuronparE^{k} \function^{k-1}}}}\tj{$e^a+e^{-a}\geq e^{\abs{a}}$}\\
&\leq~\levellpr\tj{penultimate display}\,.
  \end{align*}

Next, for all $i\in\{2,\dots,\nbrlayers\}$,
\begin{align*}
  \abs{\neuronparE^i}~&\geq~2\log[1/\levellpr]\tj{by assumption on $\neuronparE^i$}\\
&\geq~\frac{\log[1/\levellpr]}{ 2\abs{\function^i}^2}\tj{$\abs{\function^i}\geq 1/2$ by above}\\
&\geq~\frac{1}{ 4\abs{\function^i}^2}\tj{$\levellpr<1$ by assumption}\,,
\end{align*}
which implies 
\begin{equation*}
 \frac{\abs{\neuronparE^i}}{\log\abs{\neuronparE^i}}~\geq~\sqrt{\abs{\neuronparE^i}}~\geq~\frac{1}{2\abs{\function^i}}\,.
\end{equation*}
We then find similarly as above
\begin{equation*}
\absB{\bigl(1-(\function^i)^2\bigr)\neuronparE^i}~\leq~1~~\text{for all}~i\in\{2,\dots,\nbrlayers\}\,.
\end{equation*}
Collecting the pieces yields the first claim.

The second claim then follows readily from the fact that $\tuningparameter\geq\deffitA$ and 
\begin{multline*}
 \absBB{\frac{\partial}{\partial\neuronparE^k}\Bigl(\lossW+\frac{\tuningparameter}{2}\sum_{i=1}^{\nbrlayers} (\neuronparE_i)^2\Bigr)}\\\geq~
\tuningparameter\abs{\neuronparE_i}-\absBB{\deffit\,\function^{k-1}\bigl(1-(\function^{\nbrlayers})^2\bigr)\cdots\bigl(1-(\function^k)^2\bigr)\,\prod^{\nbrlayers}_{i=k+1}\neuronparE^i}\,.
\end{multline*}
\end{proof}

\subsection{A remedy for the many-layers regime}
\label{sec:remedyml}
We finally show that rescaling can avoid the vanishing-gradient problem in the many-layers regime.
Consider a general activation function $\function\,:\,\R\to\R$ and a rescaled version of it defined by
\begin{equation}
\label{scaling}
  \begin{aligned}
  \functionscale~:~\R~&\to~\R\,;\\
   \argumentE~&\mapsto~\functionscale[\argumentE]\,\deq\, \scale\,\function[\argumentE/\scale]\,,    
  \end{aligned}
\end{equation}
where $\scale\in(0,\infty)$ is a given scale parameter.
It is well-known that $\functionscale=\function$ for relu-type functions \citep{Hebiri2020,Taheri2020};
in other words, relu-type activation is invariant under the above rescaling.

We, however, are interested in rescaled versions of sigmoid activations.
For reference, we call
\begin{align*}
  \functiontanhscale~:~\R~&\to~(-\scale,\scale)\\
   \argumentE~&\mapsto~       \functiontanhscaleF{\argumentE}\deq \scale\,\functiontanhF{\argumentE/\scale}
\end{align*}
the \emph{$\scale$-scaled \nametanh}.
More generally, we speak of \emph{$\scale$-scaled sigmoid functions},
and we call replacing the original sigmoid functions by their scaled versions the \textit{scaling trick.}
The scaling trick retains the flavor of the original \nametanh\ for small arguments ($\functiontanhscaleF{\argumentE}\approx\functiontanhF{\argumentE}$ for $\argumentE\approx 0$),
but it essentially multiplies the \nametanh\ by~\scale\ for large arguments ($\functionscaleF{\argumentE}\approx\scale\functionF{\argumentE}$ for $\abs{\argumentE}\gg 0$).

A consequence of these properties is that we can increase the derivatives---$(\functionscaleF{\argumentE})'>(\functionF{\argumentE})'$ for $\scale>1$---and, therefore, avoid  the vanishing-gradient problem in the many-layers regime.
We illustrate this feature once more in our toy model.
Generalizing \nametanh\ to $\scale$-scaled \nametanh\ leads to the partial derivatives
\begin{equation*}
 \frac{\partial}{\partial\neuronparE^i}\lossW~=
~\deffit\,\scale\,\function_{\scale}^{k-1}\bigl(1-(\function_{\scale}^{\nbrlayers})^2\bigr)\cdots\bigl(1-(\function_{\scale}^k)^2\bigr)\,\prod^{\nbrlayers}_{i=k+1}\neuronparE^i\,,
\end{equation*}
where
\begin{equation*}
  \function_{\scale}^i~\deq~\functiontanhFBB{\neuronparE^{i}\functiontanhFB{\neuronparE^{i-1}\cdots\functiontanhF{\neuronparE^1 \datainEnew/\scale}}}
\end{equation*}
for all $i\in\{1,\dots,\nbrlayers\}$ and $\function_{\scale}^0\deq1$.
This equality can be derived from the general fact that 
\begin{equation*}
  \functionscaleFBB{\functionscaleFB{\dots\functionscaleF{\argumentE}}}~=~\scale\,\functionFBB{\functionFB{\dots\functionF{\argumentE/\scale}}}~~~\text{for all}~\argumentE\in\R
\end{equation*}
irrespective of the base function~$\function$.
One can readily verify that the above derivatives equal to the ones in the previous section in the case $\scale=1$.
The following theorem now compares $\scale=1$ and $\scale>1$ in the context of vanishing gradients.
\begin{theorem}[Many-Layers Regime]
\label{res:MLR}
Assume that $ \abs{\datainEnew}~=~\abs{\neuronparE^1}~=~\cdots~=~\abs{\neuronparE^{\nbrlayers}}~=~1.5$.
Then, if $\scale=1$ (usual \nametanh\ activation),
it holds  for all $k\in\{1,\dots,\nbrlayers\}$ that
\begin{equation*}
\absBB{\frac{\partial}{\partial\neuronparE^k}\lossW}~\leq~\deffitA\,\frac{1}{2^{\nbrlayers-k+1}}\,,
\end{equation*}
while for increasing $\scale$ (cf.~scaling trick)
\begin{equation*}
\lim_{\scale\to\infty}\absBB{\frac{\partial}{\partial\neuronparE^k}\lossW}~=~\deffitA\,(1.5)^{\nbrlayers-1}\,\abs{\datainEnew}\,.
\end{equation*}
\end{theorem}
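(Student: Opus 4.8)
The statement consists of two independent estimates on the $k$-th partial derivative, the second being the $\scale$-scaled analogue of the first, so I would treat them in turn, in both cases starting from the closed-form gradient expressions recorded above. For the bound at $\scale=1$, the plan is to take absolute values, use $\abs{\neuronparE^i}=1.5$ to replace $\prod_{i=k+1}^{\nbrlayers}\neuronparE^i$ by $(1.5)^{\nbrlayers-k}$, and bound the leading factor $\abs{\function^{k-1}}$ by $1$ (boundedness of \nametanh; for $k=1$ it is $\abs{\datainEnew}=1.5$, handled analogously below). The only real work is to see that each factor $1-(\function^j)^2$ is not just $\le 1$ but genuinely small. For this I would show by induction on $j$ that $\abs{\function^j}\ge c$ for all $j\in\{1,\dots,\nbrlayers\}$, where $c$ is any constant with $3(1-c^2)\le 1$ (for instance $c=5/6$): the base case is $\abs{\function^1}=\functiontanhF{\abs{\neuronparE^1\datainEnew}}=\functiontanhF{2.25}\ge c$, and the step uses $\abs{\function^j}=\functiontanhF{1.5\abs{\function^{j-1}}}\ge\functiontanhF{1.5c}\ge c$, which holds because \nametanh\ is increasing. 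Then $1.5\,(1-(\function^j)^2)\le 1/2$ for every $j\ge 1$, so pairing each of the $\nbrlayers-k$ factors $1.5$ with a distinct factor $1-(\function^j)^2$ and bounding the one leftover factor by $1/2$ gives $(1.5)^{\nbrlayers-k}\prod_{j=k}^{\nbrlayers}(1-(\function^j)^2)\le 2^{-(\nbrlayers-k+1)}$, which is the first claim.

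For the limit as $\scale\to\infty$, I would pass to the limit factor by factor in $\deffit\,\scale\,\function_\scale^{k-1}(1-(\function_\scale^{\nbrlayers})^2)\cdots(1-(\function_\scale^{k})^2)\prod_{i=k+1}^{\nbrlayers}\neuronparE^i$. Because the innermost argument $\neuronparE^1\datainEnew/\scale\to 0$ and \nametanh\ is continuous with $\functiontanhF{0}=0$, each $\function_\scale^j\to 0$, hence $1-(\function_\scale^j)^2\to 1$ for $j\in\{k,\dots,\nbrlayers\}$, while $\prod_{i=k+1}^{\nbrlayers}\abs{\neuronparE^i}=(1.5)^{\nbrlayers-k}$ is constant; this reduces everything to computing $\lim_{\scale\to\infty}\abs{\scale\,\function_\scale^{k-1}}$, a $0\cdot\infty$ indeterminate form. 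Here I would use that $\scale\,\function_\scale^{k-1}$ is exactly the $(k-1)$-th hidden activation $\functiontanhscaleF{\neuronparE^{k-1}\functiontanhscaleF{\cdots\functiontanhscaleF{\neuronparE^1\datainEnew}}}$ of the $\scale$-scaled network (via $\functiontanhscaleF{\argumentE}=\scale\,\functiontanhF{\argumentE/\scale}$ and the composition identity from the text; for $k=1$ it is simply $\datainEnew$), together with property~4 (linearity at the origin, $\scale\,\functiontanhF{\argumentE/\scale}\to\argumentE$) in its uniform-on-compacts form: since $\abs{\functiontanhscaleF{\argumentE}}\le\abs{\argumentE}$, all these hidden activations stay in a fixed compact set independent of $\scale$, so one may pass to the limit layer by layer and obtain $\scale\,\function_\scale^{k-1}\to(\prod_{i=1}^{k-1}\neuronparE^i)\datainEnew$ — equivalently, the $\scale$-scaled network converges to the linear network $\datainEnew\mapsto(\prod_{i=1}^{\nbrlayers}\neuronparE^i)\datainEnew$ and its $k$-th parameter-gradient to that of the linear network. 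Multiplying the limits of the individual factors gives $\deffitA\,(1.5)^{k-1}\abs{\datainEnew}\,(1.5)^{\nbrlayers-k}=\deffitA\,(1.5)^{\nbrlayers-1}\abs{\datainEnew}$, with $\deffit$ treated as a fixed scalar (or, if the loss is built from the scaled network, noting that $\deffit\to\loss'[\dataoutEnew-(\prod_i\neuronparE^i)\datainEnew]$).

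The $\scale=1$ half is routine once $c$ is chosen so that the recursion $c\mapsto\functiontanhF{1.5c}$ does not drop below it. The genuine obstacle is the $0\cdot\infty$ term $\scale\,\function_\scale^{k-1}$: making the interchange of $\scale\to\infty$ with the layerwise composition rigorous is precisely where the linearity-at-the-origin property of \nametanh\ — and the elementary a priori bound $\abs{\functiontanhscaleF{\argumentE}}\le\abs{\argumentE}$ that keeps the iterates in a compact set — is indispensable.
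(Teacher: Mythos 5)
Your proposal is correct and follows essentially the same route as the paper's proof: the same induction lower-bounding $\abs{\function^j}$ so that each factor $1.5\,(1-(\function^j)^2)\le 1/2$, and the same layerwise limit $\scale\,\function_\scale^{j}\to\prod_{m\le j}\neuronparE^m\datainEnew$ via linearity of \nametanh\ at the origin. You are in fact somewhat more careful than the paper in justifying the interchange of the limit $\scale\to\infty$ with the layerwise composition (via the a priori bound $\abs{\functiontanhscaleF{\argumentE}}\le\abs{\argumentE}$) and in handling the boundary case $k=1$.
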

\noindent
The result can be readily generalized to other, not necessarily equal, values of $\datainEnew,\neuronparE^1,\dots,\neuronparE^{\nbrlayers}$,
but the given choice already captures the key aspects.
Indeed,
we see that the gradients in the standard case can vanish with  increasing depth (see the factor $1/2^{\nbrlayers-k+1}$),
while this behavior is avoided with larger~$\scale$ 
Thus, the theorem illustrates that deep learning can suffer from vanishing gradients in the many-layers regime, and it suggests the scaling trick as a remedy for it.

The factor $(1.5)^{\nbrlayers-1}$ in the second claim is just an artifact of taking the limit $\scale\to\infty$;
in particular,
the scaling trick does not lead to exploding for any practical choice of~$\scale$.
We confirm this fact,
as well as the fact that regularization does not lead to exploding gradients either,
in the experiments section.
In contrast,
 ``naive scaling'' $\functionF{\argumentE}\to\scale\,\functionF{\argumentE}$,
which one could also use to prevent  vanishing gradients,
clearly entails the risk of exploding gradients:
\begin{equation*}
\lim_{\scale\to\infty}\absBB{ \frac{\partial}{\partial\neuronparE^k}\lossW }~=~\infty\,. 
\end{equation*}

The parameter~$\scale$ can be learned from data---similarly as the parameters of other parametrized activation functions~\citep[Section~2.3.4]{Lederer2021}.
But it turns out that the fixed choice $\scale=2$ is sufficient in practice (see the experimental results),
which makes the implementation of the scaling trick very easy and efficient.

Let us finally compare our scaling approach to  batch normalization \citep{Ioffe2015}.
Batch normalization attempts to ``whiten'' the data within each gradient step,
 that is, to keep the distribution of the inputs constant during training. 
It is related to our approach in that their scaling of the data can also be seen as a scaling of the parameters.
But batch normalization involves (usually many) new parameters and considerable computational overhead,
while our scaling works with a single, predefined parameter and no other changes to the optimization routine.
Moreover, 
batch normalization alters the network (essentially the activations) drastically,
while scaling retains the flavor of the original network.
Hence,
our scaling trick could be interpreted as a simpler, more efficient, and less invasive alternative to  batch normalization.

\begin{proof}[Proof of Theorem~\ref{res:MLR}]
Consider first usual \nametanh\ activation.
It then holds that $\abs{\function^{k-1}}\leq 1$ by definition of the \nametanh\ function.
Next,
under the stated assumptions, it holds that 
\begin{equation*}
  \abs{\datainEnew}~\geq~1.5\,\functiontanhF{1.5}~~~~~\text{and}~~~~~\abs{\neuronparE^1},\dots,\abs{\neuronparE^{\nbrlayers}}~\geq~\frac{1.3}{\functiontanhF{1.5}}\,.
\end{equation*}
Hence, $\abs{\function^i}\geq\functiontanhF{1.3}$ for all $i\in\{0,\dots,\nbrlayers\}$ and, therefore, 
\begin{equation*}
  \absB{1-(\function^i)^2}~\leq~\absB{1-\bigl(\functiontanhF{1.3}\bigr)^2}~\leq~\frac{1}{2\cdot 1.5}\,.
\end{equation*}
And finally, $\abs{\neuronparE^i}\leq 1.5$ by assumption.
Putting these results into our earlier calculation of the gradient yields 
\begin{equation*}
\absBB{\frac{\partial}{\partial\neuronparE^k}\lossW}~\leq~\deffitA\cdot1\cdot\frac{1}{2^{\nbrlayers-k+1}}\,,
\end{equation*}
which implies the first claim.

Consider now  $\scale$-scaled activation.
The approximate linearity of the \nametanh\ activation at the origin yields
\begin{equation*}
  \lim_{\scale\to\infty}\scale\, \function_{\scale}^i~=~\prod_{m=1}^i\neuronparE^m\datainEnew\,.
\end{equation*}
This equality, in turn, implies $\function_{\scale}^i\to0$ for $\scale\to\infty$ and, therefore, $1-(\function_{\scale}^i)^2\to1$.
Collecting the pieces yields
\begin{equation*}
\lim_{\scale\to\infty}\absBB{\frac{\partial}{\partial\neuronparE^k}\lossW}~=~ \deffitA\,\absBBBB{\prod_{m=1}^{k-1}\neuronparE^m\datainEnew}\cdot\absBBBB{\prod_{i=k+1}^{\nbrlayers}\neuronparE^i}\,,
\end{equation*}
which implies the second claim.
\end{proof}
\noindent

\section{Empirical Support}
\label{sec:empirical}

\newcommand{\figurecombinedA}[1]{
\begin{figure*}[#1]
	\centering
	\caption{
accuracies (top) and summary statistics of the coordinate values of the gradients (bottom)  in the combined regime
\unc{a better label for the blue line is ``unregularized_unscaled'' and for the orange line ``regularized_scaled''}
}
	\includegraphics[width=\textwidth]{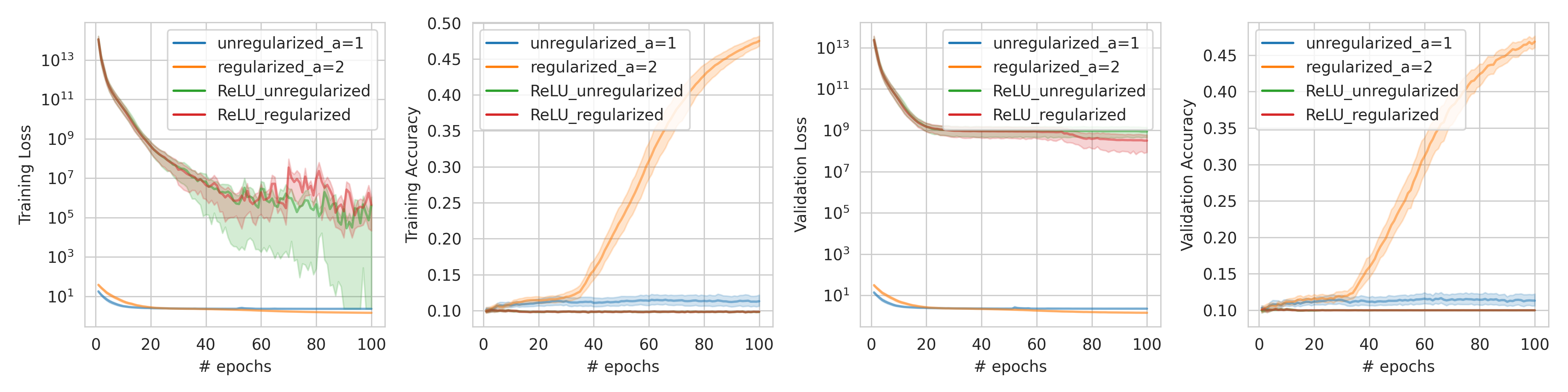}
	\label{fig:Model Combined hist}
	\includegraphics[width=\textwidth]{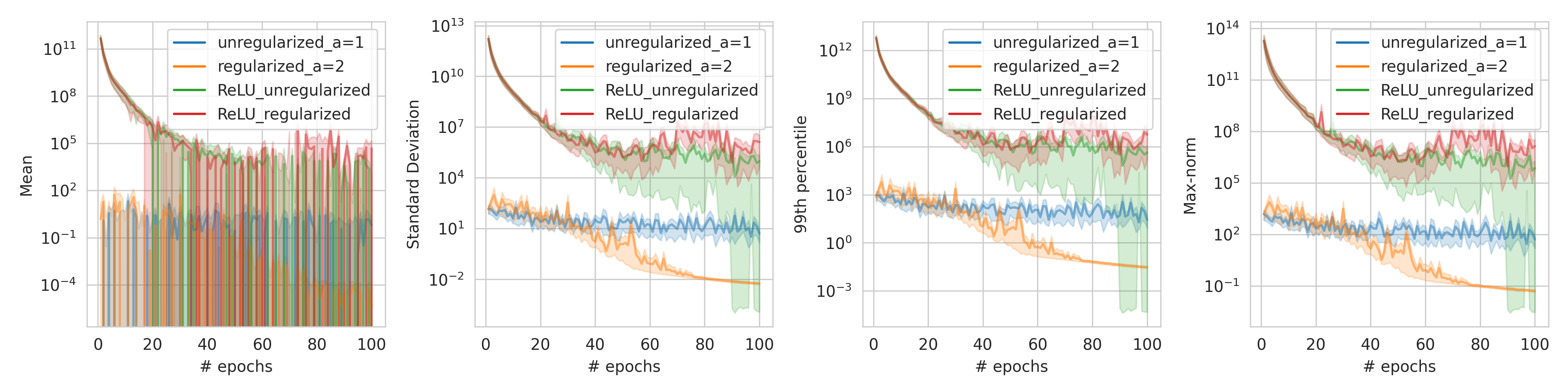}
\end{figure*}}

We now illustrate that the two remedies can indeed avoid vanishing gradients in
practice. 
To disentangle the effects,
we consider the large-parameters regime and the many-layers regime
individually and combined.

The data under consideration is CIFAR-10 \citep{cifar10}. 
The deep-learning ecosystem  is  \texttt{TensorFlow} in \texttt{Python}; 
the optimization algorithm is \texttt{Adam} with learning rate $0.001$.
Some details on the computations and further empirical results are deferred to the Appendix.

\subsection{Large-parameters regime}

We first demonstrate that regularization can avoid vanishing gradients in the
large-parameters regime. The architecture of the neural-network class under
consideration is given in Table~\ref{table:Model LP} (all these tables are moved to the Appendix). 
The initial parameters are sampled independently and
identically from a uniform distribution supported on the interval $[-10,10)$.
In practice, parameters are usually initialized with smaller values,
but our broad range of possible values 1.~allows us to illustrate the fact that regularization works even for very large, and very different, parameter values and 2.~can become realistic, irrespective of the initialization, during training anyways.
The activation function is $\nametanh$ except for the pooling and output layers,
which use linear and soft-max activation,
respectively.

We compare unregularized training ($\tuningparameter=0$) and $\ell_2$-regularized training ($\tuningparameter=0.01$)---cf.~Section~\ref{sec:remedylpr}.
The pipeline is run 25 times, with a different seed each time.
The resulting accuracy and gradient histories are displayed in Figure~\ref{fig:Model LP hist}.

\begin{figure*}
	\centering
	\caption{
Accuracies (top) and summary statistics of the coordinate values of the gradients (bottom) in the large-parameters regime.
(Here and in the following,  solid lines denote the averages over the runs and shaded areas the corresponding interquartile ranges.)
}
	\includegraphics[width=\textwidth]{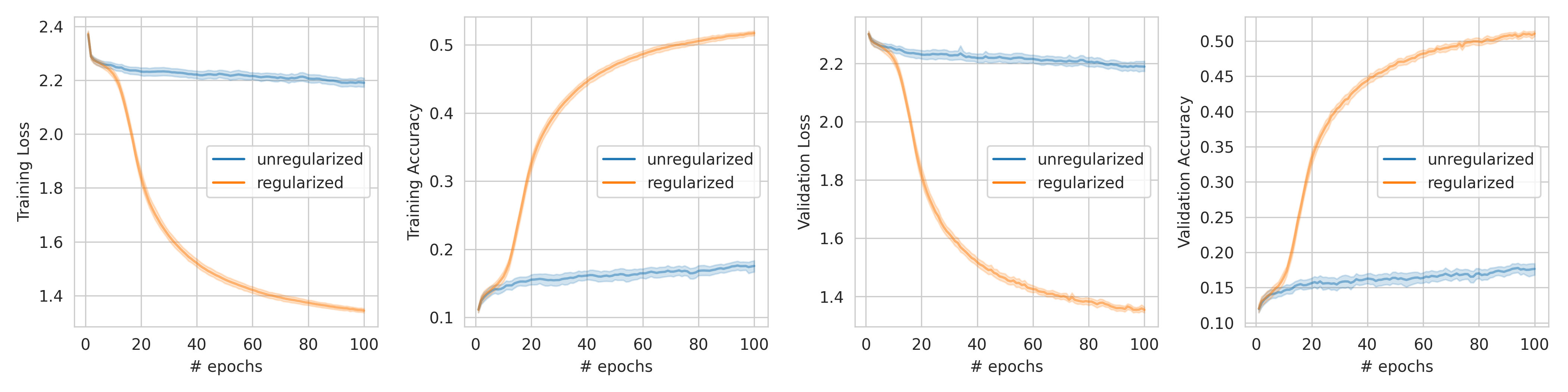}
	\label{fig:Model LP hist}
	\includegraphics[width=\textwidth]{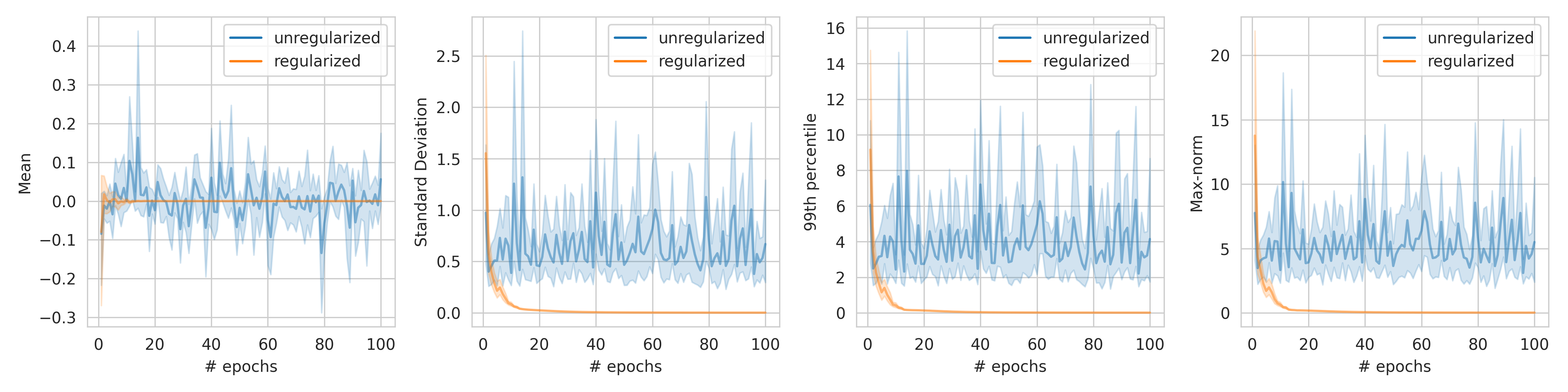}
\end{figure*}

The top panel of Figure~\ref{fig:Model LP hist} shows that unregularized and regularized training perform very similarly initially,
but after about ten epochs,
the regularization improves the accuracies considerably.
The bottom panel of that figure highlights the reason for this improvement:
regularization leads to larger gradients in the beginning,
which allows the training to ``escape'' the problematic regions eventually.
Figure~\ref{fig:Model LP hist} also demonstrates the fact that regularization does not lead to exploding gradients either:
indeed, the gradients become smaller as the training algorithm reaches more favorable regions in the parameter space.

We conclude that regularization can indeed avoid the vanishing-gradient problem (without leading to an exploding-gradient problem) in the large-parameters regime.

\unc{
I am also adding some suggestions in orange. 
These suggestions are uncritical: 
you can delay working on them,
or even disregard them.
}

\unc{
The rule is to capitalize the first letter (if not a symbol) of the axes labels: 
for example, ``Training loss'' rather than ``Training Loss.''

lw: Do you mean ``Training loss'' instead of ``Training Loss''? I treat the axis
label as a title so I capitalised both. If you want I can change them.\\
jl: Yes, sorry. I corrected my example.
I would suggest you change it, but it definitely very low priority...
}

\subsection{Many-Layers regime}
We now demonstrate that the scaling trick can avoid vanishing gradients in the many-layers regime.
The network class is specified in  Table~\ref{table:Model DL}. 
The table shows that the many-layers regime is modeled, not surprisingly, by stacking many layers.
The parameters are initialized based on a centered normal distribution with standard deviation $0.05$.
The activations are the same as above.
We compare unscaled training (using the standard \nametanh\ activation) and scaled training (using $\scale$-scaled \nametanh \ activation with $\scale=2$).
Again, we average the results over $25$ runs with different seeds.
The resulting accuracy and gradient histories are displayed in Figure~\ref{fig:Model DL hist}.

We observe a very similar pattern as in the large-parameters regime:
the larger gradients in the beginning (see the bottom panel of the figure) speed up parameter training considerably (see the top panel).
The main difference to above is that the improvements kick in immediately (indicating that there are no plateaux from which the algorithm needs to ``escape'').

We conclude that the scaling trick can avoid the vanishing-gradient problem, again without leading to an exploding-gradient problem, in the many-layers regime.

\begin{figure*}
	\centering
	\caption{
accuracies (top) and summary statistics of the coordinate values of the gradients (bottom) in the many-layers regime
\unc{we might need to adjust those---cf slack...}
}
	\includegraphics[width=\textwidth]{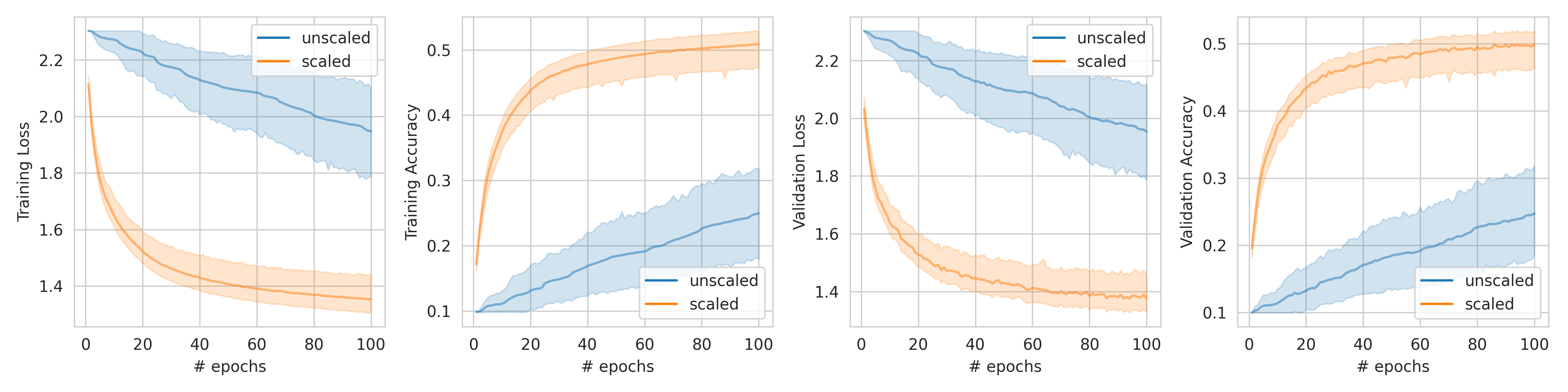}
	\label{fig:Model DL hist}
	\includegraphics[width=\textwidth]{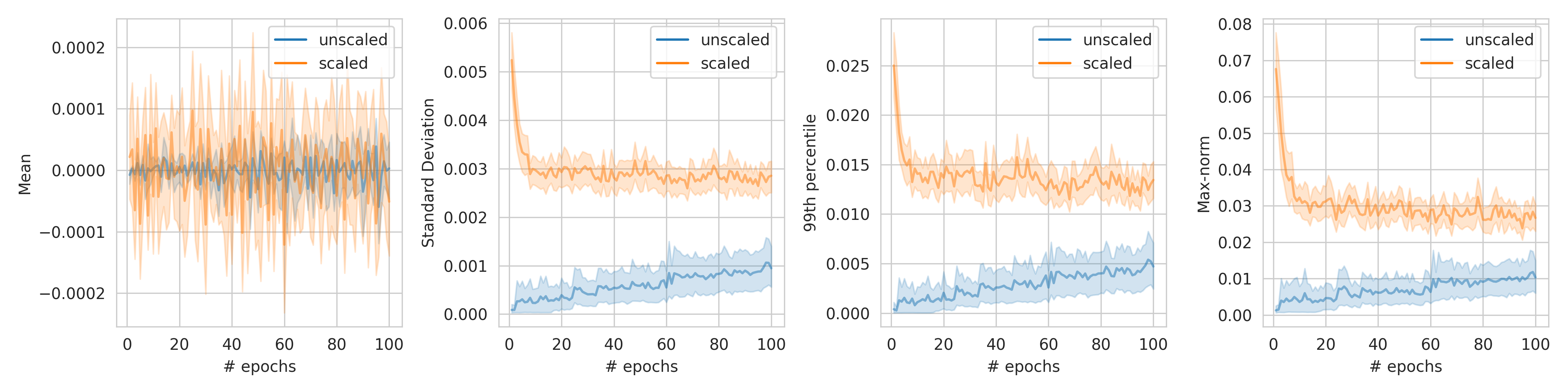}
\end{figure*}

\figurecombinedA{t}
\subsection{Combined regime and comparison with \namerelu\ activation}

We finally demonstrate that regularization and scaling can be combined.
The outline of the model is given in Table~\ref{table:Model Combined}. The parameters are initialized based on a uniform distribution supported on $[-10,10)$.
The activations and the parameters for the regularization and the scaling tricks are the same as above.
We compare the standard pipeline with the one that includes regularization and scaling,
and we also consider \nametanh\ replaced by \namerelu.
The results over $25$ runs with different seeds are displayed in Figure~\ref{fig:Model Combined hist}  (since scaling does not affect relu-type functions, only regularization is mentioned in the \namerelu\ case).
The results here corroborate the results above and show that the two remedies can indeed be combined.
An additional observation is that \namerelu, in this specific example, is not performing well.

Table~\ref{table:Comparison of runtimes} in the Appendix contains the runtimes of one model from the combined regime
 executed on the same, regular graphics card.
It shows that our two remedies do not  involve significant computational overhead.

We conclude that our remedies can render \nametanh\ robust with respect to the vanishing-gradient problem.
Our experiments are not designed to give an overall comparison of different types of activation functions, such as \nametanh\ and \namerelu, or even to find the ``best'' activation function;
in fact, which activation function is the ``best'' one surely depends on the specific settings (architecture, data, parameters, ...), the goal (computational efficiency, accuracy, ...), and so forth.
But what our study is able to show is that sigmoid-type activation is a serious contender.

\section{Discussion}
\label{sec:discussion}

The paper provides a fresh look at the vanishing-gradient problem in the context of sigmoid-type activation.
In particular,
the paper shows that two simple remedies,
regularization and rescaling,
can avoid the vanishing-gradient problem.
This observation suggests that sigmoid-type activation is still a contender in practice.

There are also efforts to study the vanishing- (and exploding-)gradient problems from the perspective of mean-field theory \citep{RWDL,Hanin2018}.
These works currently focus on how network widths and depths relate to vanishing gradients, 
but it would be interesting to see if their approaches can also yield further theoretical insights into our findings.

For the sake of a clear and concise delivery of the main ideas, we have restricted ourselves to feedforward networks.
But we have every reason to believe that our concepts apply much more generally,
such as to recurrent neural networks (RNNs) \citep{kalchbrenner2013recurrent,mikolov2012statistical}, for example.
Similarly,
the numerical setup focuses on \nametanh\ activation,
which is currently the most popular sigmoid-type activation,
but our remedies also apply to other activation functions,
such as \namesoft\ activation---cf.~\citep[Section~3]{Lederer2021}---for example.
And finally,
other types of regularizations, such as $\ell_1$-regularization \citep{Alvarez16,Feng17,Kim16}, are expected to avoid vanishing gradients similarly as $\ell_2$-regularization.

Many questions, however, remain open. 
In particular, our paper does not yet give a final answer to the question of which activation function to use in a specific setting.
But our paper illustrates that an entire class of activation functions might have been underrated,
and,  on a more general level,
it illustrates that conceptual and mathematical research---rather than anecdotal evidence or hearsay---can lead to more informed design choices in practice.


\ifarXiv
\subsection*{Acknowledgements}
We thank Xingtu Liu, Pegah Golestaneh, Mike Laszkiewicz, and Nils M\"uller for helpful input.
\else

\fi

\bibliography{Sections/Bibliography}


\ifarXiv\else
\section*{Checklist}

\begin{enumerate}

\item For all authors...
\begin{enumerate}
  \item Do the main claims made in the abstract and introduction accurately reflect the paper's contributions and scope?
    \answerYes{}
  \item Did you describe the limitations of your work?
    \answerYes{}
  \item Did you discuss any potential negative societal impacts of your work?
    \answerNA{We do not expect any direct societal impacts.}
  \item Have you read the ethics review guidelines and ensured that your paper conforms to them?
    \answerYes{}
\end{enumerate}

\item If you are including theoretical results...
\begin{enumerate}
  \item Did you state the full set of assumptions of all theoretical results?
    \answerYes{}
	\item Did you include complete proofs of all theoretical results?
    \answerYes{}
\end{enumerate}

\item If you ran experiments...
\begin{enumerate}
  \item Did you include the code, data, and instructions needed to reproduce the main experimental results (either in the supplemental material or as a URL)?
    \answerYes{}
  \item Did you specify all the training details (e.g., data splits, hyperparameters, how they were chosen)?
    \answerYes{}
	\item Did you report error bars (e.g., with respect to the random seed after running experiments multiple times)?
    \answerYes{}
	\item Did you include the total amount of compute and the type of resources used (e.g., type of GPUs, internal cluster, or cloud provider)?
    \answerYes{}
\end{enumerate}

\item If you are using existing assets (e.g., code, data, models) or curating/releasing new assets...
\begin{enumerate}
  \item If your work uses existing assets, did you cite the creators?
    \answerYes{}
  \item Did you mention the license of the assets?
    \answerYes{}
  \item Did you include any new assets either in the supplemental material or as a URL?
    \answerNA{We did not use new assets.}
  \item Did you discuss whether and how consent was obtained from people whose data you're using/curating?
    \answerYes{}
  \item Did you discuss whether the data you are using/curating contains personally identifiable information or offensive content?
    \answerYes{}
\end{enumerate}

\item If you used crowdsourcing or conducted research with human subjects...
\begin{enumerate}
  \item Did you include the full text of instructions given to participants and screenshots, if applicable?
    \answerNA{}
  \item Did you describe any potential participant risks, with links to Institutional Review Board (IRB) approvals, if applicable?
    \answerNA{}
  \item Did you include the estimated hourly wage paid to participants and the total amount spent on participant compensation?
    \answerNA{}
\end{enumerate}

\end{enumerate}
\fi


\newpage
\appendix
 \section{Experimental Details}
Here, we provide the tables about the architectures as mentioned in the experimental section.

\begin{table}[h]
	\caption{neural-network architecture for the large-parameters regime}
	\centering
	\label{table:Model LP}
        \scriptsize
	\begin{tabular}{c|c|c}
\toprule
		Layer & Output dimensions & Filters \\
		\midrule
		Input       & $(32,32,3)  $ & \\
		3-by-3 convolution & $(32,32,3)  $ & 36 \\
		3-by-3 convolution & $(32,32,3)  $ & 36 \\
		Max pooling     & $(16,16,3)  $ & 0 \\
		3-by-3 convolution & $(16,16,3)  $ & 36 \\
		3-by-3 convolution & $(16,16,3)  $ & 36 \\
		Max pooling     & $(8,8,3)    $ & 0 \\
		3-by-3 convolution & $(8,8,3)    $ & 36 \\
		3-by-3 convolution & $(8,8,3)    $ & 36 \\
		Max pooling     & $(4,4,3)    $ & 0 \\
		Dense       & $(8)        $ & 392 \\
		Dense       & $(10)       $ & 90\\
\bottomrule
	\end{tabular}
\end{table}

\begin{table}[h]
	\caption{neural-network architecture for the many-layers regime}
	\centering
	\label{table:Model DL}
        \scriptsize
	\begin{tabular}{c|c|c}
\toprule
		Layer & Output dimensions & Filters \\
		\midrule
		Input       & $(32,32,3)  $ & \\
		3-by-3 convolution & $(32,32,3)  $ & 3 \\
		3-by-3 convolution & $(32,32,3)  $ & 3 \\
		3-by-3 convolution & $(32,32,3)  $ & 3 \\
		Max pooling     & $(16,16,3)  $ & 3 \\
		3-by-3 convolution & $(16,16,3)  $ & 3 \\
		3-by-3 convolution & $(16,16,3)  $ & 3 \\
		3-by-3 convolution & $(16,16,3)  $ & 3 \\
		Max pooling     & $(8,8,3)    $ & 3 \\
		3-by-3 convolution & $(8,8,3)    $ & 3 \\
		3-by-3 convolution & $(8,8,3)    $ & 3 \\
		3-by-3 convolution & $(8,8,3)    $ & 3 \\
		Max pooling     & $(4,4,3)    $ & 3 \\
		Dense       & $(8)        $ & 384 \\
		Dense       & $(8)        $ & 64 \\
		Dense       & $(8)        $ & 64 \\
		Dense       & $(8)        $ & 64 \\
		Dense       & $(10)       $ & 80\\
\bottomrule
	\end{tabular}
\end{table}

\begin{table}[h]
	\caption{neural-network architecture for the combined regime}
	\centering
	\label{table:Model Combined}
        \scriptsize
	\begin{tabular}{c|c|c}
\toprule
		Layer & Output dimensions & Filters \\
		\midrule
		Input       & $(32,32,3)  $ & \\
		3-by-3 convolution & $(32,32,3)  $ & 3 \\
		3-by-3 convolution & $(32,32,3)  $ & 3 \\
		Max pooling     & $(16,16,3)  $ & 3 \\
		3-by-3 convolution & $(16,16,3)  $ & 3 \\
		3-by-3 convolution & $(16,16,3)  $ & 3 \\
		Max pooling     & $(8,8,3)    $ & 3 \\
		3-by-3 convolution & $(8,8,3)    $ & 3 \\
		3-by-3 convolution & $(8,8,3)    $ & 3 \\
		Max pooling     & $(4,4,3)    $ & 3 \\
		Dense       & $(8)        $ & 384 \\
		Dense       & $(8)        $ & 64 \\
		Dense       & $(8)        $ & 64 \\
		Dense       & $(10)       $ & 80\\
\bottomrule
	\end{tabular}
\end{table}
\newpage
\section{Runtimes}
Here, we provide the table about the runtime as mentioned in the experimental section.
\begin{table}[h]
	\caption{comparison of runtimes}
\small
	\centering
	\label{table:Comparison of runtimes}
	\begin{tabular}{l|r}
		\toprule
		Model & Runtime (s) \\
		\midrule
		\nametanh\ unregularized and unscaled & $1057.8$ \\
		\nametanh\ regularized and scaled & $1385.2$ \\
		\namerelu\ unregularized & $986.8$ \\
		\namerelu\ regularized & $1339.1$ \\
		\bottomrule
	\end{tabular}
\end{table}

\section{Other Sigmoid-Type Activations}
In the main part of the paper,
we have focused on \nametanh\ activation.
But we have also mentioned that our results can be transferred readily to other sigmoid-type activation.
For illustration,
we provide an analog of Figure~\ref{fig:Model Combined hist} for the case of \nametanh\ replaced by \namelog.
We obtain virtually the same results as before.

\begin{figure}[h]
	\centering
	\caption{
		accuracies (top) and summary statistics of the coordinate values of the gradients (bottom)  in the combined regime with \nametanh\ replaced by \namelog\  activation}
	\includegraphics[width=\textwidth]{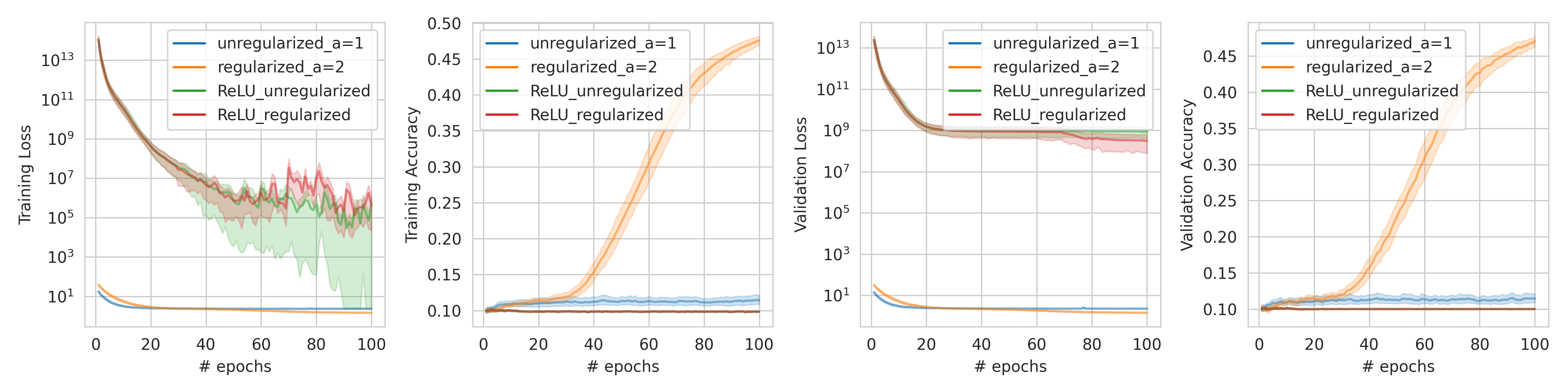}
	\includegraphics[width=\textwidth]{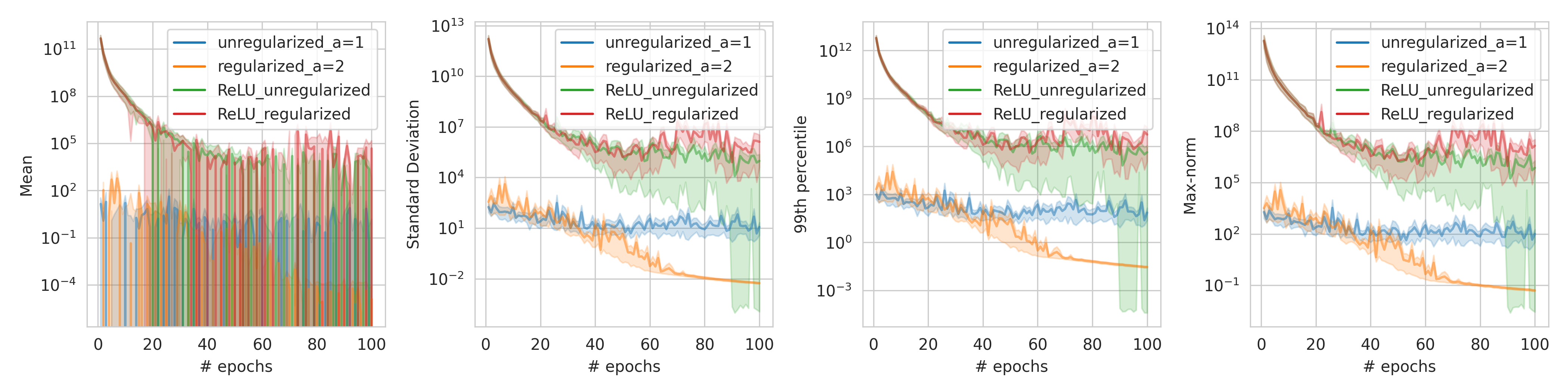}
\label{fig:logsig}
\end{figure}


\end{document}